\newcommand\defeq{:=}
\algnewcommand\algorithmicinput{\textbf{Input:}}
\algnewcommand\INPUT{\item[\Alg.\icinput]}
\algnewcommand\algoirhtmicoutput{\textbf{Output:}}
\algnewcommand\OUTPUT{\item[\algorithmicoutput]}
\DeclareMathOperator*{\argmin}{argmin}
\newcommand{\vx}{x}
\newcommand{\partition}{\mathcal{F}}
\newcommand{\cluster}{\mathcal{C}}
\newcommand{\samplingset}{\mathcal{M}}
\newcommand{\edges}{\mathcal{E}}
\newcommand{\nodes}{\mathcal{V}}
\newcommand{\graph}{\mathcal{G}}
\newcommand{\graphsigs}{\mathbb{R}^{\mathcal{V}}}
\newtheorem{theorem}{Theorem}
\newtheorem{lemma}[theorem]{Lemma}
\begin{document}

\title{The Logistic Network Lasso} 

\author{Henrik~Ambos,
        Nguyen~Tran,
        and~Alexander~Jung
\thanks{Authors are with the Department of Computer Science, Aalto University, Finland; firstname.lastname(at)aalto.fi}
}


	\maketitle
\begin{abstract}
	
	We apply the network Lasso to solve binary classification and clustering problems for network-structured data. 
	To this end, we generalize ordinary logistic regression to non-Euclidean data with an intrinsic network 
	structure. The resulting ``logistic network Lasso'' amounts to solving a non-smooth convex 
	regularized empirical risk minimization. The risk is measured using the logistic loss incurred over a small 
	set of labeled nodes. For the regularization we propose to use the total variation of the classifier requiring 
	it to conform to the underlying network structure. A scalable implementation the learning method is obtained 
	using an inexact variant of the alternating direction methods of multipliers which results in a scalable learning algorithm.
	
\end{abstract}

\begin{IEEEkeywords}
Lasso, big data over networks, semi-supervised learning, classification, clustering, complex networks, convex optimization, ADMM
\end{IEEEkeywords}

\IEEEpeerreviewmaketitle

\section{Introduction}
        
        The recently introduced extension of the \emph{least absolute shrinkage and selection operator} (Lasso) 
	to network-structured data, coined the ``network Lasso'' (nLasso), allows efficient processing of massive 
	datasets using modern convex optimization methods \cite{NetworkLasso}. In this paper, we apply nLasso 
	to semi-supervised classification and clustering of massive network-structured datasets (big data over networks) \cite{Bhagat11,LargeGraphsLovasz,BigDataNetworksBook}. 
	

	Most of the existing work on nLasso-based methods focused on predicting numeric labels (or target variables) within regression 
	problems \cite{WhenIsNLASSO,NNSPFrontiers,NetworkLasso,Chen2015,SandrMoura2014,JungSLPcomplexit2018}. 
	In contrast, we apply nLasso to binary classification (and clustering) problems which assign binary-valued 
	labels to data points. The resulting logistic nLasso (lnLasso) aims at balancing the empirical error, measured 
	using the logistic loss, incurred for a small number of data points whose labels are known with the amount 
	by which the resulting classifier conforms to the underlying network structure. 
	
	Thus, lnLasso is an instance of regularized empirical risk minimization with the total variation 
	of the classifier as regularization term \cite{VapnikBook}. This minimization problem is a non-smooth convex optimization problem 
	which we solve using an inexact form of the alternating direction method of multipliers (ADMM) \cite{DistrOptStatistLearningADMM}.
	 
	The lnLasso provides an alternative to the family of label propagation (LP) methods \cite{SemiSupervisedBook}. While LP methods 
	are based on using the squared error loss to measure the empirical risk incurred over labeled data points, the lnLasso uses the 
	average logistic loss over the labeled data points to assess the quality of a particular classifier. 
	
	While lnLasso is based on a probabilistic model for the labels, it considers the network structure as given and fixed. This is different 
	from the semi-supervised classification method presented in \cite{Zhan2014}. In particular, \cite{Zhan2014} applies belief propagation 
	for a stochastic block model (SBM) to semi-supervised classification. This mehtod is an approximation to the Bayes optimal classifier given 
	the probabilistic SBM. 
	
	Finally, our method is closely related to graph-cut methods \cite{Ruusuvuori2012,Kechichian2018,Boykov2004}. Indeed, both 
	are based on a similar optimization problem. In contrast to graph-cut methods, our approach provides a precise probabilistic i
	interpretation of this optimization problem. Moreover, while our implementation of lnLasso is based on convex optimization 
	(allowing for highly scalable implementations), graph-cuts is based on combinatorial optimization. 
	
\section{Problem Formulation}
\label{sec_problem_formuation} 

	We consider network-structured datasets that can be represented by an undirected weighted graph (the 
	``empirical graph'') $\graph = (\nodes, \edges, \mathbf{W})$. A particular node $i \in \nodes$ of the graph represents an 
	individual data point (such as a document, an image or an entire social network user profile). Two different data points $i,j \in \nodes$ 
	are connected by undirected edges $\{i,j\} \in \edges$ if these data points are considered similar (such as the documents authored by the same person or 
	the social network profiles of befriended users). 
	
	Each undirected edge $\{i,j\} \in \edges$ is associated with a positive weight $W_{ij}>0$ 
	quantifying the amount of similarity between data points $i,j \in \nodes$. The neighbourhood of a node $i \in \nodes$ is defined as 
	$\mathcal{N}(i) \defeq \{ j : \{i,j\} \in \edges \}$. Without essential loss of generality, 
	we only consider datasets with a connected empirical graph $\graph$ with more than one node. 
	
	It will be useful to think of an undirected edge $\{i,j\}\!\in\!\edges$ as a pair of two directed edges $(i,j)$ and $(j,i)$. 
	With a slight abuse of notation we denote by $\edges$ the set of undirected edges as well as the set of directed edges obtained 
	by replacing each undirected edge by a pair of directed edges. 
	
	On top of the network structure, datasets often convey additional information such as labels $y_{i}$ (e.g., the class membership) 
	of the data points $i \in \nodes$. 
	In what follows, we focus on binary classification problems involving binary labels $y_{i} \in \{-1,1\}$. 
	Since the acquisition of reliable label information is costly, we typically have access only to  
	few labels $y_{i}$ of the nodes in a small sampling set $\samplingset\!\subset\!\nodes$.
	
	We model the labels $y_{i}$ of the data points $i \in \nodes$ as independent random variables with (unknown) 
	probabilities 
     \begin{equation} 
	\label{equ_def_p_i}
	p_{i} \defeq {\rm Prob} \{ y_{i} \!=\! 1 \}, 
	\end{equation} 
	We parametrize these probabilities using  the logarithm of the 
	``odds ratio'',
	\begin{equation}
	\label{equ_def_log_odds_ratio}
		\vx[i] \defeq {\rm log} (p_{i} / (1-p_{i})).
	\end{equation}	
	
	The quantity \eqref{equ_def_log_odds_ratio} defines a graph signal $\vx[\cdot]: \nodes \rightarrow \mathbb{R}$  
	assigning each node $i \in \nodes$ of the empirical graph $\graph$ the signal value $x[i]$. 
	Our approach uses a graph signal $x[\cdot]$ to represent a classifier for the data points in $\graph$. 
	In particular, we classify a data point $i \in \nodes$ as $\hat{y}_{i}={\rm sign} \{ x[i]\}$. 
	Any reasonable classifier $x[\cdot]$ should agree well with available label information 
	such that $ \hat{y}_{i} \approx  y_{i}$ for all $i \in \samplingset$. 
	
	\vspace*{-2mm}
	\section{Logistic Network Lasso}
\label{sec_lNLasso}
	
	It is sensible to learn a classifier $x[\cdot]: \nodes \rightarrow \mathbb{R}$ from few initial labels 
	$\{ y_{i} \}_{i \in \samplingset}$ by maximizing their probability (evidence), or equivalently by minimizing 
	the empirical error  
	\begin{equation} 
	\label{equ_def_emp_risk}
	\widehat{E}(\vx[\cdot])\!\defeq\!\frac{1}{\left|\samplingset \right|} \sum_{i \in \samplingset} \!\ell ( y_{i} \vx[i]) 
	\end{equation} 
	with the logistic loss function 
		\vspace*{-2mm}
	\begin{equation}
	\label{equ_def_log_loss}
	\ell (z) \defeq {\rm log} (1 + \exp(-z) ).
 \vspace*{-1mm}
	\end{equation}
	
	The criterion \eqref{equ_def_emp_risk} by itself is not sufficient for guiding the learning of a classifier $x[\cdot]$ 
	based on the labels $\{ y_{i} \}_{i \in \samplingset}$. Indeed, the criterion \eqref{equ_def_emp_risk} 
	ignores the signal values $x[i]$ at non-sampled nodes $i \in \nodes \setminus \samplingset$. 
	
	In order to learn an entire classifier $x[\cdot]$ from the incomplete information provided 
	by the initial labels $\{ y_{i} \}_{i \in \samplingset}$, we need to impose some additional structure on the classifier $x[\cdot]$. 
	In particular, any reasonable classifier $x[\cdot]$ should conform with the \emph{cluster structure} of the empirical graph $\graph$ \cite{NewmannBook}. 
	
	The extend to which a graph signal $x[\cdot]$ conforms with the cluster structure is measured by the total variation (TV)
	\begin{align}
	\label{equ_def_TV_norm}
		\| \vx[\cdot] \|_{\rm TV} & \defeq \sum_{\{i,j\}\in \edges} W_{ij} \left| \vx[j] - \vx[i] \right|.   \\[-4mm] 
		\nonumber
	\end{align}
	Indeed, a graph signal $x[\cdot]$ has a small TV only if the signal values $x[i]$ are approximately constant 
	over well connected subsets (clusters) of nodes. 
	
	We are led quite naturally to learning a classifier $x[\cdot]$ 
	via the following \emph{regularized empirical risk minimization} 
	\begin{align} \label{optProb}
		\hat{\vx}[\cdot] & \in \argmin_{x[\cdot] \in \graphsigs} \widehat{E}(\vx[\cdot])  + \lambda \| \vx[\cdot] \|_{\rm TV}. 
	\end{align}
	The regularization parameter $\lambda$ in \eqref{optProb} allows to trade-off a small TV  $\| \hat{x}[\cdot] \|_{\rm TV}$ 
	of the classifier $\hat{x}[\cdot]$ against a small empirical error $\widehat{E}(\hat{x}[\cdot])$ (cf. \eqref{equ_def_emp_risk}). 
	We refer to \eqref{optProb} as the logistic nLasso (lnLasso) problem. The choice of $\lambda$ can be guided by cross 
	validation techniques \cite{hastie01statisticallearning}. 
	
	Note that lnLasso \eqref{optProb} does not enforce directly the labels $y_{i}$ 
	to be clustered. Instead, it requires the classifier $x[\cdot]$ (which parametrizes the probability distributed of 
	the labels $y_{i}$ via \eqref{equ_def_log_odds_ratio}) to be clustered (have a small TV). 

\section{Implementation via Inexact ADMM}
\label{sec_lNLasso_ADMM}

        We now present a classification method which is obtained by solving \eqref{optProb} using 
        inexact ADMM. To this end, we introduce for each directed edge $(i,j) \in \edges$ 
        of the oriented empirical graph $\graph$, the auxiliary variable $z_{ij}$. These variables  
	act as local copies of the optimization variables $x[i]$ in \eqref{optProb}. 
       
         We can then reformulate lnLasso \eqref{optProb} as (cf.\ \eqref{equ_def_TV_norm})
	\begin{align}\label{LNLprob}
		\hat{\vx}[\cdot] \in &\argmin_{x[\cdot] \in \graphsigs}  \widehat{E}(x[\cdot]) + (\lambda/2) \sum_{ ( i, j ) \in \edges } W_{i,j} |z_{ij} - z_{ji} | \\
		&\text{s.t.} \indent x[i] = z_{ij}\indent i\in\nodes, \indent j\in\mathcal{N}(i).  \label{equ_local_consistency}
	\end{align}
	The reformulation \eqref{LNLprob} of the lnLasso \eqref{optProb} is computationally appealing since the objective function 
	in \eqref{LNLprob} consists of two independent terms. The first term is the empirical risk $\widehat{E}(x[\cdot])$ which measures how well the initial labels $y_{i}$ agree 
	with the classifier $x[\cdot]$. The second term is the scaled TV $\lambda \sum_{ (i, j) \in \edges } W_{i,j} |z_{ij} - z_{ji} |$, which measures 
	how well the classifier $x[\cdot]$ is aligned with the cluster structure of $\graph$. 
        These two terms are coupled via \eqref{equ_local_consistency}. 

	In order to solve the non-smooth convex optimization problem \eqref{LNLprob}, we apply an 
	inexact variant of ADMM \cite{EcksteinYao2017}. To this end, we define the augmented Lagrangian \cite{DistrOptStatistLearningADMM}
	\begin{align}
	\label{nLasso}
		\mathcal{L}(x[\cdot],z_{ij},u_{ij}) &\defeq \widehat{E}(x[\cdot]) + (\lambda/2) \sum_{(i,j) \in \edges} W_{ij} \left| z_{ij} - z_{ji} \right|  \\[-4mm]
		&\hspace*{-15mm}+\!(\rho/2) \hspace*{-3mm} \sum_{(i,j) \in \edges}  \hspace*{-3mm}\big[ (x[i]\!-\!z_{ij}\!+\!u_{ij})^2\!-\!u_{ij}^{2} 
		\big]\nonumber\\[-7mm]
		\nonumber
	\end{align}
	with dual variables $u_{ij}$ introduced for each edge $(i,j) \in \edges$.
	
	Ordinary (exact) ADMM optimizes $\mathcal{L}(x[\cdot],z_{ij},u_{ij})$ block coordinate-wise by iterating the following updates:
		\vspace*{-1mm}
	\begin{align}
		\hspace *{-2mm} x^{(k+1)}[\cdot] &\!\defeq\! \argmin_{x[\cdot] \in \graphsigs}  \mathcal{L}(x[\cdot],z_{ij}^{(k)},u_{ij}^{(k)})  \label{equ_x_update} \\[1mm]
		z^{(k+1)}_{ij}  &\!\defeq\! \argmin_{ z_{ij}  \in \mathbb{R}}  \mathcal{L}(x^{(k+1)}[\cdot], z_{ij} , u_{ij}^{(k)}) \label{equ_z_update} \\[1mm]
		u^{(k+1)}_{ij} & \hspace*{-2mm}\!\defeq\! u^{(k)}_{ij}\!+\!x^{(k+1)}[i]\!-\!z^{(k+1)}_{ij} \mbox{ for each } (i,j) \!\in\! \edges. \label{equ_u_update}\\[-4mm]
		\nonumber
	\end{align}
	The update \eqref{equ_x_update} minimizes the empirical error $\widehat{E}(\vx[\cdot])$, while update \eqref{equ_z_update} 
	minimizes the TV $\| \vx[\cdot] \|_{\rm TV}$. These two minimization processes are coupled via \eqref{equ_u_update} using 
	the dual variables $\{ u_{ij} \}_{(i,j) \in \edges}$. Each dual variable $u_{i,j}$ corresponds to a particular directed edge $(i,j) \in \edges$ 
	and the corresponding constraint $x[i] =z_{ij}$ (see \eqref{equ_local_consistency}). 
	
	Using the masked labels $\tilde{y}_{i} \!=\! y_{i}\!\in\!\{-1,1\}$ for sampled nodes $i \in \samplingset$ and $\tilde{y}_{i} = 0$ otherwise, 
	the  update \eqref{equ_x_update} becomes  
	\begin{align}
	\label{updateX}
		\hspace*{-3mm}x^{(k\!+\!1)}[i] \!=\! \argmin_{x \in \mathbb{R}} \underbrace{ \ell( \tilde{y}_{i} x)\!+\!\frac{|\samplingset|\rho}{2} \hspace*{-2mm} \sum_{j \in \mathcal{N}(i)}\hspace*{-2mm} ( x\!-\!z_{ij}^{(k)}\!+\!u_{ij}^{(k)})^2}_{\defeq f_{i}(x)}.
	\end{align}
	
	The update \eqref{equ_z_update} can be worked out as (see, e.g., \cite{NetworkLasso})
	\begin{align}
		\hspace*{-9mm}{z}_{ij}^{(k+1)} \!=\! \theta ({x}^{(k\!+\!1)}[i]\!+\!{u}_{ij}^{(k)})\!+\!(1\!-\!\theta) ({x}^{(k+1)}[j]\!+\!{u}_{ji}^{(k)}) \nonumber 
\end{align} 
with 
\begin{equation} 
	\theta \!=\! \max \left(\frac{1}{2}, 1-\frac{(\lambda/\rho) W_{ij}}{ |{x}^{(k+1)}[i]\!+\!{u}_{ij}^{(k)}\!-\!{x}^{(k+1)}[j]\!-\!{u}_{ji}^{(k)}|} \right). \nonumber 
\end{equation}

	The presence of the logistic loss function \eqref{equ_def_log_loss} precludes a closed-form solution of \eqref{updateX}. 
	However, since \eqref{updateX} amounts to a scalar unconstrained smooth minimization, we can solve \eqref{updateX} approximately 
	by cheap iterative methods.  
	Replacing the ADMM update \eqref{updateX} by an inexact update might 
	still yield convergence to the solution of \eqref{LNLprob} \cite[Theorem 8]{EcksteinBert92}. 

	As we will show, \eqref{updateX} can be approximated by 
	\begin{equation} 
	\label{equ_inexact_update}
         \hat{x}^{(k\!+\!1)}[i] \!= \hspace*{-6mm} \underbrace{\Phi_{i} \circ \ldots \circ \Phi_{i}}_{|\tilde{y}_{i}|\lceil 2\log(2(k\!+\!1)) / \log ( |\samplingset| \rho d_{i}) \rceil} \hspace*{-3mm}(1/d_{i}) \sum_{j \in \mathcal{N}(i)} \hspace*{-2mm} \big( z^{(k)}_{ij} \!-\! u^{(k)}_{ij} \big)   
	\end{equation} 
	with the node degree $d_{i} = |\mathcal{N}(i)|$ and the map 
	\begin{equation} 
	\label{equ_phi_i}
	\Phi_{i} (x)\!\defeq\!\frac{\tilde{y}_{i}/(|\samplingset|d_{i}\rho)}{1\!+\!\exp(\tilde{y}_{i}x)}\!+\!(1/d_{i}) \hspace*{-2mm} \sum_{j \in \mathcal{N}(i)} \hspace*{-2mm}\big( z^{(k)}_{ij} \!-\! u^{(k)}_{ij} \big).
	\end{equation} 
         Replacing the update \eqref{updateX} with \eqref{equ_inexact_update} yields Alg. \ref{alg:ADMM}. 
         	\begin{algorithm}[]
		\caption{lnLasso via inexact ADMM}\label{alg:ADMM}
\begin{algorithmic}[1]
\renewcommand{\algorithmicrequire}{\textbf{Input:}}
\renewcommand{\algorithmicensure}{\textbf{Output:}}
\Require   $\graph$, $\samplingset$, $\{ y_{i} \}_{i \in \samplingset}$, $\lambda$, $\rho$
\Statex\hspace{-6mm}{\bf Initialize:}$k\!\defeq\!0$, $x^{(0)}[i]\!\defeq\!0$, $z^{(0)}_{ij}\!\defeq\!0$, $u^{(0)}_{ij}\!\defeq\!0$, $\tilde{y}_{i}\!\defeq\!0$, $\tilde{y}_{i} \defeq y_{i}$ for all sampled nodes $i \in \samplingset$
			\Repeat
				\For{$i \in \nodes$}
					      \State set $\tilde{x}^{(0)}\!\defeq\!(1/d_{i}) \sum\limits_{j \in \mathcal{N}(i)} (z_{ij}^{(k)}\!-\!u_{ij}^{(k)})$
					      	\vspace*{2mm}
					       \State set $n \defeq 0$
					       	\vspace*{2mm}
					       \Repeat 
					        \State $ \tilde{x}^{(n+1)} \defeq \Phi_{i}(\tilde{x}^{(n)})$  (see \eqref{equ_phi_i})
					        	\vspace*{2mm}
						\State $ n \defeq n +1$
							\vspace*{2mm}
						\Until $n \geq \lceil |\tilde{y}_{i}| 2 \log(2(k\!+\!1)) / \log ( |\samplingset| \rho d_{i}) \rceil$
							\vspace*{2mm}
						 \State $\hat{x}^{(k+1)}[i] \defeq  \tilde{x}^{(n)}$ 
				\EndFor
				\hspace*{-20mm}	\For{ $(i,j) \in \edges$}
					\vspace*{2mm}
					\State \hspace*{-10mm}$\theta \defeq \max \bigg\{ 1/2, 1\!-\!\frac{(\lambda/\rho) W_{ij}}{ |\hat{x}^{(k\!+\!1)}[i]\!+\!{u}^{(k)}_{ij}\!-\!\hat{x}^{(k\!+\!1)}[j]\!-\!{u}^{(k)}_{ji} |} \bigg\}$
					\vspace*{2mm}
					\State \hspace*{-10mm} ${z}_{ij}^{(k\!+\!1)}\!\defeq\!\theta (\hat{x}^{(k\!+\!1)}[i]\!+\!{u}_{ij}^{(k)})\!+\!(1\!-\!\theta) (\hat{x}^{(k\!+\!1)}[j]\!+\!{u}_{ji}^{(k)})$
					\vspace*{2mm}
					\State \hspace*{-10mm} ${u}_{ij}^{(k\!+\!1)} \defeq {u}_{ij}^{(k)}\!+\!(\hat{x}^{(k\!+\!1)}[i]\!-\! {z}_{ij}^{(k\!+\!1)} )$
					\vspace*{2mm}
				\EndFor
					\vspace*{2mm}
			\State  $k\!\defeq\!k\!+\!1$
				\vspace*{2mm}
			\Until convergence 
				\vspace*{2mm}
			\Ensure classifier $\hat{x}[i] \defeq \hat{x}^{(k)}[i]$ \mbox{ for all }$i \in \nodes$
		\end{algorithmic}
	\end{algorithm}
	
       The classifier $\hat{x}[i]$ delivered by Alg.\ \ref{alg:ADMM} is then used to label the data points as $\hat{y}_{i} = 1$ 
       if $\hat{x}[i] > 0$ and $\hat{y}_{i} = -1$ otherwise. However, the classifier $\hat{x}[i]$ provides more information than 
       just the resulting (predicted) labels $\hat{y}_{i}$. 
       
       Indeed, the magnitudes $|\hat{x}[i]|$ quantify the confidence in the predicted labels $y_{i}$. A magnitude 
       $|\hat{x}[i]|$ close to zero indicates the predicted label $\hat{y}_{i}$ to be unreliable. On the other 
       hand, if the magnitude $|\hat{x}[i]|$ is large then we can be quite confident in the predicted label $\hat{y}_{i}$. 

       The convergence of the iterates $\hat{x}^{(k)}[i]$ generated by Alg.\ \ref{alg:ADMM} to the 
       solution $\hat{x}[i]$ of the lnLasso problem \eqref{LNLprob} can be verified from \cite[Theorem 8]{EcksteinBert92}. 
       In particular, convergence is guaranteed (for any $\rho>0$) 
       if the errors $\varepsilon_{k} = |\hat{x}^{(k+1)}[i] - x^{(k+1)}[i]|$ are sufficiently small such that $\sum_{k=1}^{\infty} \varepsilon_{k} < \infty$. 
       The following result verifies 
       exactly this condition. 
       \begin{lemma} 	
       \label{lem_approximate_update}
	For $|\samplingset| \rho d_{i} >1$, the deviation between the inexact and exact updates 
	\eqref{equ_inexact_update} and \eqref{updateX} satisfies 
	\begin{equation} 
	\label{equ_bound_lemma}
	| \hat{x}^{(k\!+\!1)}[i]\!-\!x^{(k\!+\!1)}[i]| \leq 1/(k\!+\!1)^2. 
	\end{equation}
	\end{lemma}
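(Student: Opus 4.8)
\emph{Proof plan.}
First I would identify the exact update \eqref{updateX} as the unique fixed point of the map $\Phi_{i}$ from \eqref{equ_phi_i}. Since $d_{i}\geq 1$, the objective $f_{i}(x)$ in \eqref{updateX} is the sum of the convex function $\ell(\tilde{y}_{i}x)$ and a strictly convex coercive quadratic, hence strictly convex and coercive, so it has a unique minimizer $x^{(k+1)}[i]$ characterized by $f_{i}'(x^{(k+1)}[i])=0$. Using $\ell'(z)=-1/(1+\exp(z))$, this stationarity condition reads
\begin{equation*}
-\frac{\tilde{y}_{i}}{1+\exp(\tilde{y}_{i}x)} + |\samplingset|\rho \sum_{j \in \mathcal{N}(i)} \big( x - z_{ij}^{(k)} + u_{ij}^{(k)} \big) = 0,
\end{equation*}
and dividing by $|\samplingset|\rho d_{i}$ and rearranging shows that it is equivalent to $x = \Phi_{i}(x)$. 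Hence $x^{(k+1)}[i]$ is the unique fixed point of $\Phi_{i}$, and $\hat{x}^{(k+1)}[i]$ in \eqref{equ_inexact_update} is the $n$-th iterate of the fixed-point iteration $\tilde{x}^{(n+1)}=\Phi_{i}(\tilde{x}^{(n)})$ started at $\tilde{x}^{(0)}=(1/d_{i})\sum_{j\in\mathcal{N}(i)}(z_{ij}^{(k)}-u_{ij}^{(k)})$, with $n=|\tilde{y}_{i}|\lceil 2\log(2(k\!+\!1))/\log(|\samplingset|\rho d_{i})\rceil$ compositions.

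Second, I would show $\Phi_{i}$ is a contraction. Only the first summand of $\Phi_{i}$ depends on $x$, and a direct computation gives $\Phi_{i}'(x) = -\tilde{y}_{i}^{2}\exp(\tilde{y}_{i}x)\big/\big(|\samplingset|\rho d_{i}(1+\exp(\tilde{y}_{i}x))^{2}\big)$. Using $\tilde{y}_{i}\in\{-1,0,1\}$ and the elementary bound $s/(1+s)^{2}\leq 1/4$ for $s>0$ (applied with $s=\exp(\tilde{y}_{i}x)$), I get $|\Phi_{i}'(x)|\leq 1/(4|\samplingset|\rho d_{i})$ for all $x$, so by the mean value theorem $\Phi_{i}$ is $q$-Lipschitz with $q:=1/(|\samplingset|\rho d_{i})$; the hypothesis $|\samplingset|\rho d_{i}>1$ ensures $q<1$, so $\Phi_{i}$ is a contraction.

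Third, I would split into the two regimes. If $\tilde{y}_{i}=0$, the prescribed iteration count is $0$, so $\hat{x}^{(k+1)}[i]=\tilde{x}^{(0)}$, which is exactly the minimizer of the purely quadratic $f_{i}$ in this case, and \eqref{equ_bound_lemma} holds with zero error. If $|\tilde{y}_{i}|=1$, writing $\Phi_{i}(x)=g_{i}(x)+\tilde{x}^{(0)}$ with $|g_{i}|\leq 1/(|\samplingset|\rho d_{i})=q<1$, the fixed-point identity yields $|\tilde{x}^{(0)}-x^{(k+1)}[i]|=|g_{i}(x^{(k+1)}[i])|<q$. The Banach estimate then gives
\begin{equation*}
|\hat{x}^{(k+1)}[i]-x^{(k+1)}[i]| \leq q^{\,n}\,|\tilde{x}^{(0)}-x^{(k+1)}[i]| < q^{\,n+1} < q^{\,n} = (|\samplingset|\rho d_{i})^{-n}.
\end{equation*}
Finally, $n\geq 2\log(2(k\!+\!1))/\log(|\samplingset|\rho d_{i})$ and $\log(|\samplingset|\rho d_{i})>0$ give $(|\samplingset|\rho d_{i})^{-n}\leq\exp(-2\log(2(k\!+\!1)))=1/(4(k\!+\!1)^{2})\leq 1/(k\!+\!1)^{2}$, which is \eqref{equ_bound_lemma}.

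The only delicate point I anticipate is keeping the chain of logarithmic inequalities relating $n$, $\log(|\samplingset|\rho d_{i})$, and the target $1/(k\!+\!1)^{2}$ consistent for every $k\geq 0$ — in particular noting that the hypothesis $|\samplingset|\rho d_{i}>1$ is precisely what makes $\log(|\samplingset|\rho d_{i})>0$, hence the iteration count finite and the ceiling at least $1$ already for $k=0$. The contraction property and the identification of the exact update with the fixed point of $\Phi_{i}$ are routine once $\ell'$ is substituted.
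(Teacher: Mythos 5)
Your proposal is correct and follows essentially the same route as the paper: identify the exact update with the unique fixed point of $\Phi_{i}$ via the zero-gradient condition, show $\Phi_{i}$ is a contraction with modulus $1/(|\samplingset|\rho d_{i})<1$, and apply the Banach--Picard estimate with the prescribed iteration count. The only (harmless) difference is that you establish a global contraction and bound the initial distance $|\tilde{x}^{(0)}-x^{(k+1)}[i]|$ by $q$ directly from the form of $\Phi_{i}$, whereas the paper instead invokes invariance of the interval $[b_{i}-1,b_{i}+1]$; your version also spells out the $\tilde{y}_{i}=0$ case and the final logarithmic computation that the paper leaves implicit.
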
 
	\begin{proof} 
	The update \eqref{updateX} is an unconstrained minimization of a differentiable 
	convex function $f_{i}(x)$. The solutions $x_{0}$ of \eqref{updateX} are solutions of 
	$f'_{i}(x_{0}) = 0$ \cite{BoydConvexBook}. Working out the derivative $f'_{i}(x)$, this ``zero-gradient 
	condition'' becomes  
	\begin{equation} 
	\label{equ_opt_condition_update_approx}
	\frac{-\tilde{y}_{i}}{1\!+\!\exp(\tilde{y}_{i}x_{0})}\!+\! |\samplingset| \rho \sum_{j \in \mathcal{N}(i)}(x_{0}\!-\!z_{ij}^{(k)}\!+\!u_{ij}^{(k)}) = 0.
	\end{equation}  
	The necessary and sufficient condition \eqref{equ_opt_condition_update_approx} for $x_{0}$ to solve \eqref{updateX} is, in turn, 
	equivalent to the fixed-point characterization 
	\vspace*{-2mm}
	\begin{equation}
	\label{equ_fixed_point_update}
        x_{0} = \Phi_{i} (x_{0}), 
        \vspace*{-2mm}
	\end{equation} 
	with the map $\Phi_{i}$ defined in \eqref{equ_phi_i}. 
	
	The approximate update \eqref{equ_inexact_update} is a fixed-point iteration \cite{BausckeCombette,JungFixedPoint} 
	for solving \eqref{equ_fixed_point_update}. The map $\Phi_{i}$ \eqref{equ_phi_i} is a contraction over the interval 
	$[b_{i}\!-\!1,b_{i}\!+\!1]$ with the shorthand $b_{i} \defeq (1/d_{i}) \sum_{j \in \mathcal{N}(i)}(z_{ij}^{(k)}\!-\!u_{ij}^{(k)})$. 
	In particular, 
	\vspace*{0mm}
         \begin{equation}
         \nonumber
         \Phi_{i} ([b_{i}\!-\!1,b_{i}\!+\!1]) \subseteq [b_{i}\!-\!1,b_{i}\!+\!1]
         \vspace*{-2mm}
         \end{equation}
         and 
         \vspace*{0mm}
         \begin{equation}
         \nonumber
         |\Phi_{i}(x)\!-\!\Phi_{i}(x')|\!\leq\! |x\!-\!x'|/(|\samplingset|\rho d_{i}) \mbox{ for } x,x'\!\in\![b_{i}\!-\!1,b_{i}\!+\!1].
         \vspace*{0mm}
         \end{equation} 
	The bound \eqref{equ_bound_lemma} follows then from standard results on fixed-point iterations 
	(see, e.g., \cite[Thm. 1.48]{BausckeCombette}). 
	\end{proof}

         Note that Alg.\ \ref{alg:ADMM} is highly scalable as it can be implemented using message passing over the 
	empirical graph $\graph_{\rm syn}$. 

\section{Numerical Experiments}
\label{sec_num}

	We assess the performance of lnLasso (Alg.\ \ref{alg:ADMM}) by means of numerical experiments using a synthetic dataset 
	whose network structure conforms to the stochastic block model (SBM). 
        In particular, we consider a dataset (of size $N=500$) whose empirical graph $\graph$ is generated according to a SBM 
        with two clusters $\cluster_{1}$, $\cluster_{2}$. The nodes are assigned to these two clusters (blocks) randomly with equal 
        probability $1/2$. Two nodes within the same group are connected with probability 
        $c_{\rm in}/N$ while nodes of different groups are connected with probability $c_{\rm out}/N$. 

        	The weights of edges connecting nodes within cluster $\cluster_{1}$ are chosen as $W_{i,j} \sim [\mathcal{N}(10,1)]_{+}$ 
	and the weights for edges within cluster $\cluster_{2}$ as $W_{i,j} \sim [\mathcal{N}(12,1)]_{+}$.
	The weights of boundary edges $\{i,j\} \in \partial\partition$ are drawn according to $W_{i,j} \sim [\mathcal{N}(3,1)]_{+}$.  
	The true underlying labels $y_{i} = 1$ for $i \in \cluster_{1}$ and $y_{i} =-1$ for $i \in \cluster_{2}$ are observed only for 
	nodes in the sampling set $\samplingset$ selected uniformly at random. 
	
	The experiments involve $K\!=\!1000$ i.i.d.\ realizations of the empirical graph $\graph$ and sampling set $\samplingset$. For each 
	realization of $\graph$ and $\samplingset$, we execute lnLasso with $\lambda=2 \cdot 10^{-5}$ and $\rho=1$ 
	(which implies the condition $|\samplingset| d_{i} \rho>1$ required by Lemma \ref{lem_approximate_update}). 
	For comparison, we also implemented belief propagation (BP) for SBM \cite{Zhan2014}, 
	 plain vanilla LP \cite{Zhu02learningfrom} and the max-flow (graph-cut) method \cite{Boykov2004}. 
	
	In Fig.\ \ref{fig:epsilon_test}, we plot the average classification accuracies (rate of correct labels) achieved by 
	the various methods for varying ratio $\epsilon\!=\!c_{\rm in}/c_{\rm out}$ of the SBM used to generate the empirical graph. 
	Choosing $\epsilon \approx 1$ corresponds to a weak cluster structure, while for $\epsilon\!\gg\!1$ the cluster structure 
	is strongly pronounced. The shaded areas in Fig.\ \ref{fig:epsilon_test} indicate 
	the empirical standard deviation obtained over the $1000$ simulation runs. 
	\vspace*{0mm}
		\begin{figure}[htbp]
		\includegraphics[width=0.9\columnwidth]{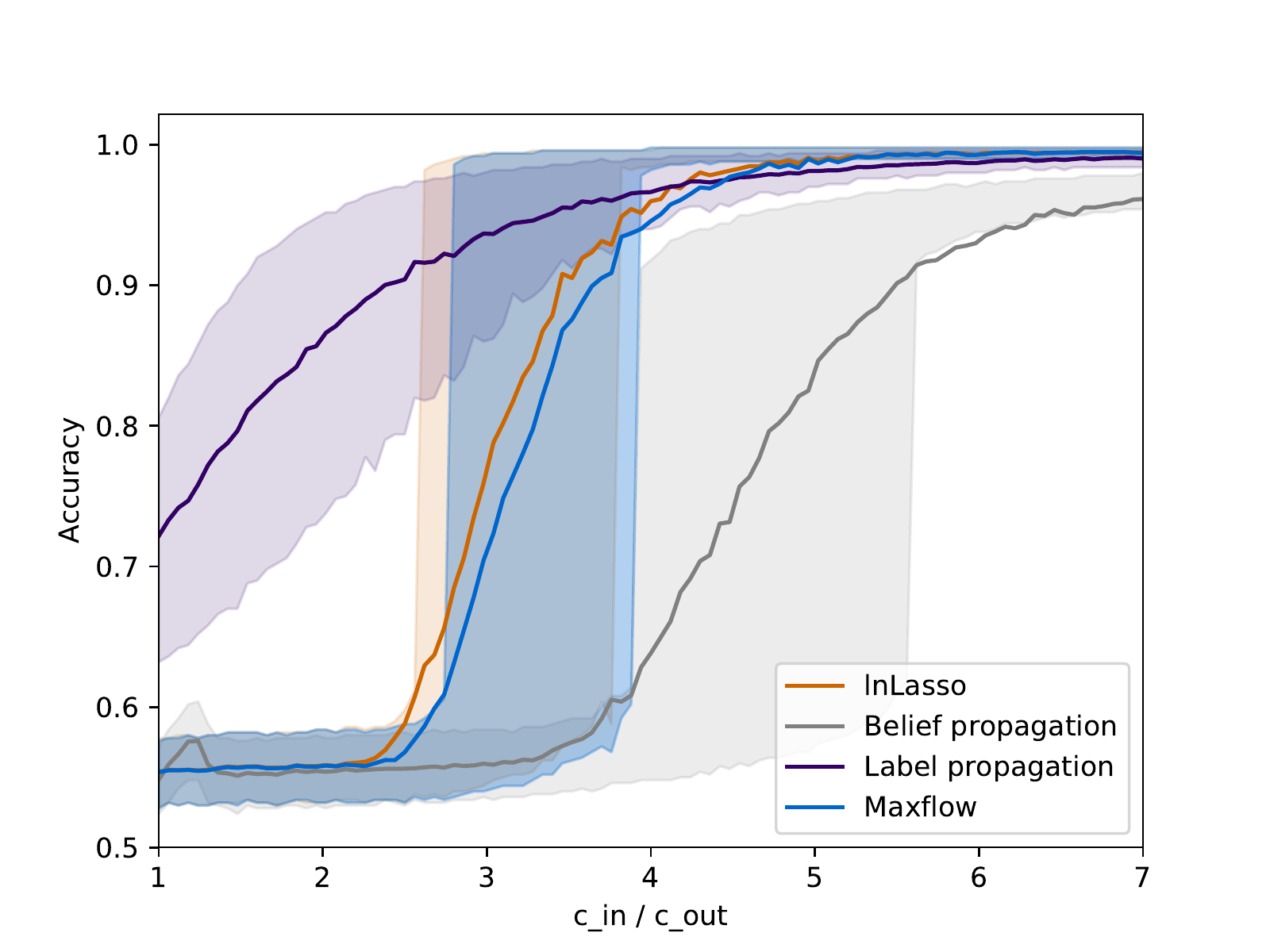}
		\vspace*{-3mm}
		\caption{Classification accuracy for varying $c_{\rm in} / c_{\rm out}$.}
		\label{fig:epsilon_test}
	\end{figure}

	According to Fig.\ \ref{fig:epsilon_test}, lnLasso performs poor relative to LP for 
	$\epsilon\!<\!2.5$, while for $\epsilon\!>\!4$, lnLasso (along with the max-flow method) slightly outperforms LP. 
	Moreover, our results suggest that lnLasso is more robust regarding the variations in the precise network structure compared to BP.

	In Fig.\ \ref{fig:alpha_test}, for the SBM with $\epsilon\!=\!5$, we depict the classification accuracy as a function of the 
	amount of labeled nodes (selected uniformly at random) quantified by the sampling ratio $\alpha\!=\!|\mathcal{M}| /N$. 
	The solid lines represent the average accuracies obtained for $1000$ simulation runs.
	For small sampling sets ($\alpha \ll1 $) the classification accuracy of lnLasso is poor compared to LP. 
	However, for $\alpha\!>\!1/20$ lnLasso becomes significantly 
	more accurate and for $\alpha\!>\!1/10$ even slightly outperforms LP. Overall, 
	lnLasso is slightly better than max-flow and significantly better than BP.

	\begin{figure}[htbp]
		\includegraphics[width=0.9\columnwidth]{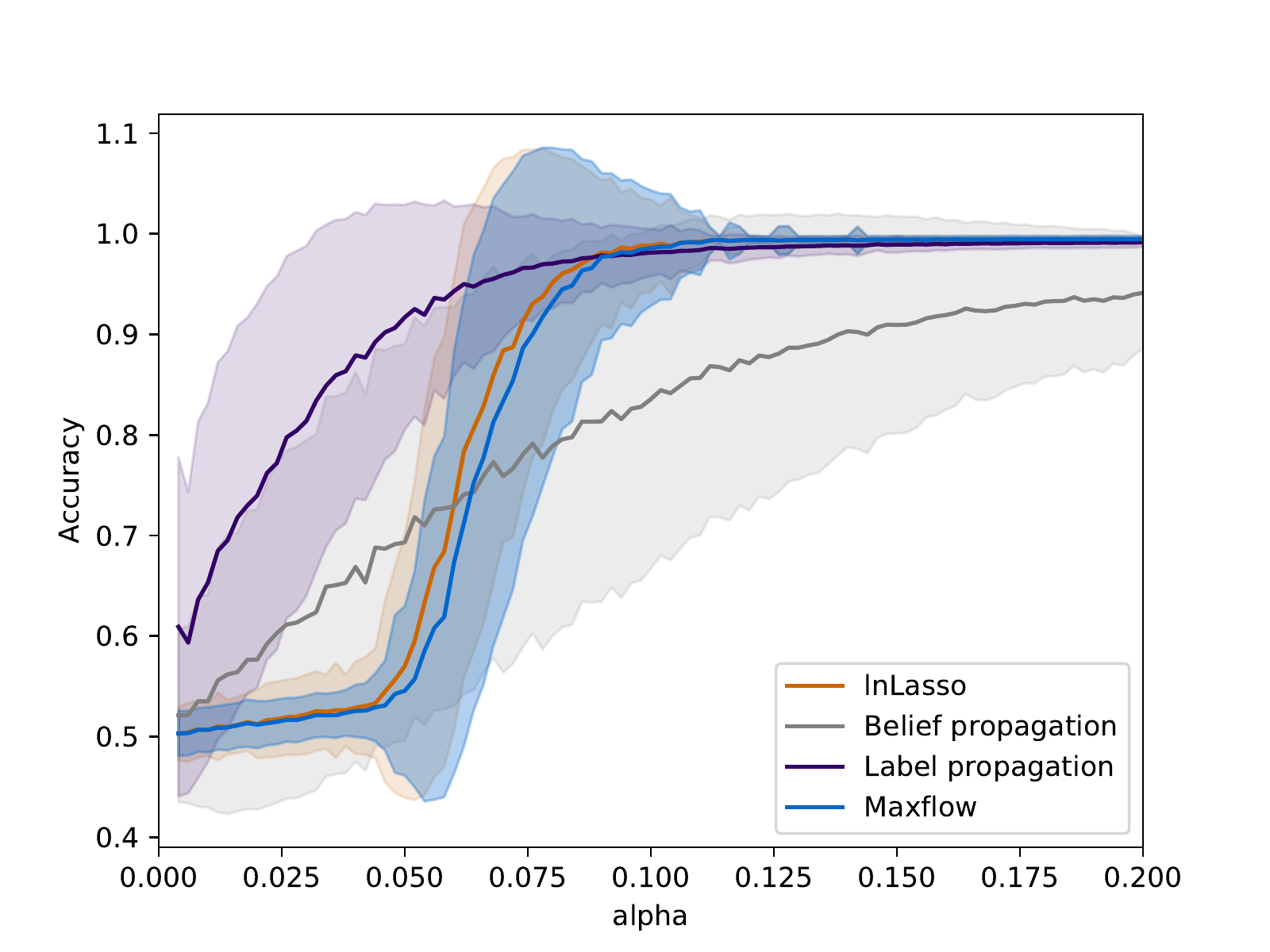}
		\vspace*{-3mm}
		\caption{Classification accuracy for varying sampling ratio $\alpha$.}
		\label{fig:alpha_test}
		\vspace*{-3mm}
	\end{figure}
	
	In a second experiment we compared LP with lnLasso on a dataset whose empirical graph is 
	a chain. This chain graph is partitioned into two clusters $\cluster_{1} = \{1,\ldots,50\}$ and $\cluster_{2} = \{51,\ldots,100\}$. 
	Edges within the same cluster have weight $W_{i,i+1}=1$, while the boundary edge $\{50,51\}$ has weight $W_{50,51}=1/2$. 
	The nodes are labeled as $y_{i} = -1 $ for $i \in \cluster_{1}$ and $y_{i}=1$ for $i \in \cluster_{2}$. We observe the labels only 
	for the sampling set $\samplingset = \{10,60\}$. 
	
	In Fig.\ \ref{scalar_lin_space}, we depict the resulting classifiers obtained from LP and lnLasso (Alg.\ \ref{alg:ADMM} with $\rho=1$ and $\lambda=1/10$), 
	each running for $1000$ iterations. In contrast to LP, lnLasso recovers the cluster structure perfectly. 
	
	In a third experiment we use lnLasso to perform forground extraction on images. We represent a RGB image as a grid graph 
	with each node representing a particular pixel. The nodes are connected to their nearest neighbours. Each node is associated 
	a label $y=1$ if the corresponding pixel is foreground and $y=-1$ if the pixel is background. In Fig.\ \ref{fig:fg_tiger}, we show the result 
	by highlighting pixel $i$ according to the classifier value $x[i]$. 
		\begin{figure}[htbp]
		\includegraphics[width=0.9\columnwidth]{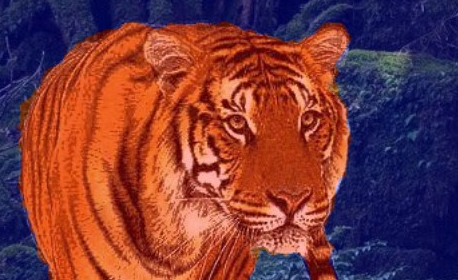}
		\vspace*{-3mm}
		\caption{The foreground extracted by lnLasso.}
		\label{fig:fg_tiger}
	\end{figure}
	
	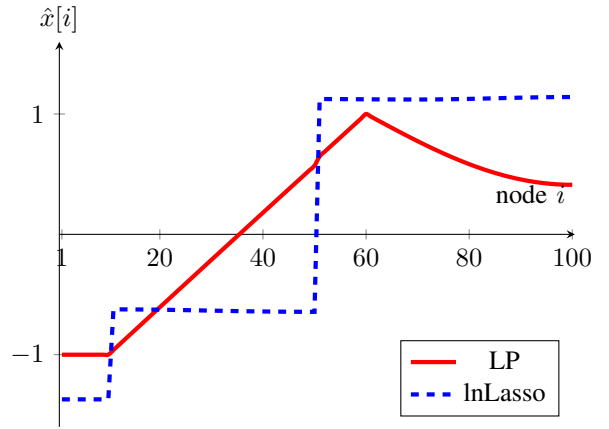
\begin{figure}[htbp]
\begin{center}
    \begin{tikzpicture}
          \begin{axis}
     [yscale=0.9,
      axis x line=center,
  axis y line=center,
 xtick={1,20,40,60,80,100}, 
  ytick={-1,1},
  xlabel={node $i$},
  ylabel={$\hat{x}[i]$},
  xlabel style={},
  ylabel style={above},
  xmin=0.5,
  xmax=100.5,
  ymin=-1.6,
  ymax=1.6,
  legend pos=south east
  ]
  \addplot  [smooth, color=red, ultra thick] table [x=a, y=b, col sep=comma] {lnLassoLP.csv} node [right,color=red] {LP};
    \addlegendentry{LP}; 
  \addplot  [color=blue, ultra thick,dashed] table [x=a, y=c, col sep=comma] {lnLassoLP.csv} node [right,color=blue] {lnLasso}; 
    \addlegendentry{lnLasso}
 \end{axis}
     \end{tikzpicture}
     \vspace*{-3mm}
\end{center}
\caption{LP and lnLasso for chain graph.}
\label{scalar_lin_space}
\end{figure}

\vspace*{-1mm}
\section{Conclusions}
\vspace*{-1mm}

	We proposed the lnLasso for classifying network-structured datasets. 
	In contrast to LP, which uses the squared error loss, lnLasso uses the logistic loss to 
	within regularized empirical risk minimization. The regularization term of lnLasso is the 
	TV of the classifier requiring it to conform with the cluster structure of the empirical graph. 
	A scalable implementation of lnLasso is obtained via inexact ADMM. 
	The effectiveness of lnLasso to learn the labels of network-structured data is assessed by 
	means of illustrative numerical experiments. Our work opens several avenues for future research:  
         We plan to extend the current approach for binary classification to multi-class and multi-label 
	problems. Moreover, we aim at analysing the statistical properties of lnLasso. This analysis would help to guide the 
	choice of the regularization parameter in lnLasso.  

	\bibliographystyle{IEEEbib}
	\bibliography{SLPBib,tf-zentral}

\end{document}